\DeclarePairedDelimiter\floor{\lfloor}{\rfloor}
\newtheorem{theorem}{Theorem}[section]
\newtheorem{corollary}{Corollary}[theorem]
\newtheorem{lemma}[theorem]{Lemma}
\theoremstyle{remark}
\newtheorem*{remark}{Remark}
\theoremstyle{definition}
\newtheorem*{definition*}{Definition}
\title{Learning Halfspaces With Membership Queries}
\author{Ori Kelner}
\date{October  2020}
\begin{document}

\maketitle

\section{ABSTRACT}
Active learning is a subfield of machine learning, in which the learning algorithm is allowed to choose the data from which it learns.
In some cases, it has been shown that active learning can yield an exponential gain in the number of samples the algorithm needs to see, in order to reach generalization error $\leq \epsilon$.
In this work we study the problem of learning halfspaces with membership queries. In the membership query scenario, we allow the learning algorithm to ask for the label of every sample in the input space. We suggest a new algorithm for this problem, and prove it achieves a near optimal label complexity in some cases. We also show that the algorithm works well in practice, and significantly out-performs uncertainty sampling.

\section{INTRODUCTION}
In recent years, active learning gained a lot of attention for its ability to reduce the amount of labeled data needed in order to learn a model. The key idea is that the learning algorithm is allowed to choose the data from which it learns. The three main settings that have been considered in the literature are: \textbf{pool-based sampling}, \textbf{stream-based selective sampling} and \textbf{membership query synthesis} \cite{Settles}.
In \textbf{pool-based sampling} we assume that there is a small set of labeled data $\mathcal{L}$ and a large pool of unlabeled data $\mathcal{U}$ available. Queries are selectively drawn from the pool. The key assumption behind \textbf{stream-based selective sampling}, is that obtaining an unlabeled instance is free (or inexpensive), so it can first be sampled from the actual distribution, and then the learner can decide whether or not to request its label. Lastly, as mentioned earlier, in \textbf{membership query synthesis}, the learner may request labels for any unlabeled instance in the input space, including (and typically assuming) queries that the learner generates de novo, rather than those sampled from some underlying natural distribution\cite{Settles}.
In this work, we will present an algorithm for learning homogeneous halfspaces, with membership queries.
Learning halfspaces in different active learning settings is very common in the literature \cite{10.1145/130385.130417, 10.1162/153244302760185243, 10.5555/1577069.1577080,10.5555/2886521.2886666}, and it was shown that it's possible to achieve exponential improvement over the usual sample complexity of supervised learning.   

Our main contributions are as follows:
\begin{enumerate}
    \item We will show that the algorithm we suggest has a near optimal label bound for the task of learning halfspaces under the uniform distribution. As far as we know, this is the first membership query algorithm for learning halfspaces with label bound guarantees. Furthermore, as we will show, the bound is not probabilistic, and is guaranteed to hold with every run of the algorithm, unlike previous results.
    \item In practice, our algorithm out-preforms uncertainty sampling significantly, achieving exponential decay in the number of samples needed in order to achieve generalization error $\leq$ $\epsilon$.
\end{enumerate}

\section{RELATED WORKS}

\subsection*{ACTIVE LEARNING OF HALFSPACES}
Active learning of homogeneous halfspaces under the uniform distribution is a known example where selective sampling can yield an exponential gain over supervised learning \cite{10.1145/130385.130417, 10.5555/1577069.1577080,10.1007/978-3-540-72927-3_5}. 
all \cite{10.1145/130385.130417, 10.5555/1577069.1577080,10.1007/978-3-540-72927-3_5} showed an exponential gain in the the stream-based scenario.

In \cite{10.1145/130385.130417} the authors used the Query-By-Committee (QBC) algorithm. The approach involves maintaining a committee $\mathcal{C} = \{f_1, \ldots , f_n\}$ of models which are all trained on the current labeled set $\mathcal{L}$, but represent competing hypotheses. Each committee member is then allowed to vote on the labelings of query candidates. The most informative query is considered to be the instance about which they most disagree. Conversely, \cite{10.5555/1577069.1577080} suggested a modification to the perceptron algorithm, namely: only querying samples that are close to the decision boundary, and also modified the update rule. 

Most similar in spirit to this work is \cite{10.5555/2886521.2886666}. \cite{10.5555/2886521.2886666} also used membership queries in order to learn halfspaces. They suggested to approximate the version space with an ellipsoid, and showed that in practice their method preforms well. 

\subsection*{SIMPLEX BISECTION METHOD}
The algorithm we suggest in this paper, is an adaptation of an old algorithm used for bisecting simplices. The algorithm was initially used for computing the roots of a continuous map, defined on a simplex \cite{10.2307/2006341}. In a nutshell, the algorithm starts with a simplex, and in every iteration bisects it to two parts, by cutting the longest edge of the simplex exactly in the middle. For a more detailed explanation see \cite{10.2307/2006341}.

\section{PRELIMINARIES AND NOTATION}

We are going to denote vectors in $\mathbb{R}^{n}$ with bold font, $\bm{x}$, and the corresponding coordinates with $x_i$.
We will denote by $S_n$  an n-simplex in $\mathbb{R}^{n}$.
Throughout this paper we will use $||\cdot||$ to denote the L2-norm. Denote the diameter of the simplex by  $d(S_n) = diam(S_n) =max\{||x-y|| \mid{} x,y \in S_n\}$ and observe that the diameter is equal to the longest edge between two vertices in the simplex.
We will denote by  $B_{r}^{L2}(u)$ the ball with radius $r$ (with respect to the L2 norm), centred at $u$. We will use $D^{L2}$ for the L2 unit sphere, and $D^{L1}$ for the L1 unit sphere. Lastly, let $V$ denote the version space, which is the set of hypotheses that are consistent with all the queries the algorithm used so far (note that $V\subseteq D^{L1}$ by our assumption that the hypotheses space is $D^{L1}$).

We assume the data is distributed uniformly over the unit sphere $D^{L2}$ in $\mathbb{R}^{n}$ (which is a common assumption for active learning of halfspaces \cite{10.1145/130385.130417, 10.5555/1577069.1577080,10.1007/978-3-540-72927-3_5}), and our hypothesis class is the set of homogeneous halfspaces. We represent every halfspace with its normal that resides on the L1 unit sphere $D^{L1}$. The reason why we choose to use the L1 unit sphere will be explained later on. We assume that there exists $w^{*}\in D^{L1}$, which separates the data perfectly (realizability). By that we mean that for every $x_{i}\in D^{L2}$ and let $y_{i}\in\{-1,1\}$ be its corresponding label, we have that  $sign(x_{i}\cdot w^{*})=y_{i}$.

\section{ALGORITHM}

The main idea of the algorithm is to find a simplex (hence the usage of the L1 unit sphere) that contains the current version space, and in each iteration bisect the longest edge between two vertices of that simplex, resulting in two smaller simplices, and continuing with one of them.
First of all, we find a simplex that contains the version space, by querying the standard unit vectors ($\bm{e_i}$). According to the labels of $\bm{e_i}$'s, we will find a simplex $S_1$ s.t $S_1 \subset D^{L1}$, whose vertices are $\pm{}\bm{e_i}$, and $V \subseteq S_1$.
After that, at every iteration the algorithm will query some $\bm{x_n} \in D^{L2}$ for its label $y_n$. We will choose $\bm{x_n}$ in such a way that only "half" of the hypotheses in $S_n$ will remain consistent with $y_n$. We achieve it by making sure $\bm{x_n}$ bisects the longest edge of the simplex, splitting it into two smaller simplices, $S_{n}^{+}$, $S_{n}^{-}$, where for every $\bm{w} \in S_{n}^{\pm}$ we have that $\bm{w} \cdot \bm{x_n} = \pm 1$ respectively. With that we can guarantee that $V \subseteq S_{n}^{y_n}$, and continue in a similar manner with $S_{n}^{y_n}$. See Figure 1.   

\begin{figure}
\begin{subfigure}{.6\textwidth}
  \centering
  \includegraphics[height=2.3cm,width=.6\linewidth]{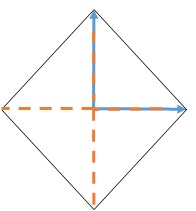}
  \caption{Start with the L1 unit sphere and query e1, e2}
  \label{fig:sfig1}
\end{subfigure}%
\begin{subfigure}{.6\textwidth}
  \centering
  \includegraphics[height=2.3cm,width=.6\linewidth]{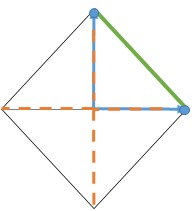}
  \caption{Assume both e1 and e2 were labeled as 1}
  \label{fig:sfig2}
\end{subfigure}
\begin{subfigure}{.6\textwidth}
  \centering
  \includegraphics[height=2.3cm,width=.6\linewidth]{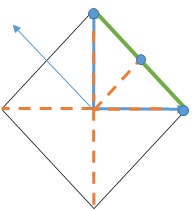}
  \caption{First iteration of the while loop}
  \label{fig:sfig3}
\end{subfigure}
\begin{subfigure}{.6\textwidth}
  \centering
  \includegraphics[height=2.3cm,width=.6\linewidth]{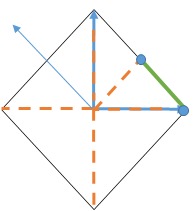}
  \caption{New simplex when $y_{new} = -1$}
  \label{fig:sfig4}
\end{subfigure}
\caption{Run example of the algorithm in $\mathbb{R}^{2}$. The diamond shape is the L1 unit sphere. Blue vectors are the queried data points. Blue dots are the current vertices, and green line is the current simplex that contains the version space. The orange dashed line represents the hyperplane that the current data point induces in the hypothesis space.}
\label{fig:fig}
\end{figure}

\begin{algorithm}[H]
\SetAlgoLined
\KwResult{Linear separator with error $\leq$  $\epsilon$}
\KwIn{$\epsilon$ generalization bound}
 Initialization: Initialize empty matrix $A_{n\times{n}}$ for the simplex vertices (each row of $A$ will correspond to a vertex in the simplex)\\
 
For every $1 \leq i \leq n$  query $\bm{e_i}$ (the standard unit base vectors) and get its label. if $y_{i} = 1$ set $\bm{e_i}$ to be the $i^{th}$ row of $A$, otherwise set it to -$\bm{e_i}$ \\
 
 \While{$d(S_n) > n\log_{\frac{\sqrt{3}}{2}}{\frac{\pi\epsilon}{2\sqrt{2n}}}$}{
  Find the longest edge between two vertices in $A$, and denote by $\bm{a_i}$ and $\bm{a_j}$ (the $i^{th}$ and $j^{th}$ rows of $A$, respectively) the vertices corresponding to that edge\\ 
  Calculate $\bm{v_{new}}$, the unit normal vector to the hyperplane spanned by $\bm{a_k}$ ($k \neq{} i,j$), and $\bm{a_{new}} = \frac{1}{2}\cdot(\bm{a_j} + \bm{a_i})$, and query its label $y_{new}$
  \eIf{ $y _{new} == sign( \bm{a_i} \cdot \bm{v_{new}})$ }{
  $\bm{a_j} = \bm{a_{new}}$
  }{
  $\bm{a_i} = \bm{a_{new}}$
  }
 }
 \caption{Version Space Minimizer}
 \Return $\frac{1}{n} \sum_{i=1}^{n}\bm{a_i}$ (the center of the simplex)
\end{algorithm}

\section{LABEL BOUND ANALYSIS}
In this section we will show a label bound for learning a linear separator, when the data is distributed uniformly over the unit sphere. As discussed earlier, this is a common scenario in the literature \cite{10.1145/130385.130417, 10.5555/1577069.1577080,10.1007/978-3-540-72927-3_5}.
In the appendix we show a simple bound in case the data is separable with margin $\gamma$ (and thus extending the analysis for other distributions).

\begin{theorem}[Label Bound For Version Space Minimizer]
\label{main}
Pick any $\epsilon > 0$. Assume the data is distributed uniformly over the unit sphere, and is perfectly labeled by some linear separator. Using the algorithm above we can get generalization error of $\epsilon$ after seeing only $O(n(\log{n} + \log{1/\epsilon}))$ labels.
\end{theorem}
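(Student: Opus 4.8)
The plan is to split the argument into three pieces: a correctness invariant, namely that the version space $V$ stays inside the current simplex $S_n$ at every step; a geometric contraction estimate showing $d(S_n)$ shrinks geometrically in the number of bisections; and a conversion from ``$S_n$ small'' to ``generalization error small''. The label count then follows by adding the $n$ initialization queries to the number of loop iterations.

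For the invariant I would induct on the iterations. After initialization, any $\bm w$ consistent with the queries $\bm e_1,\dots,\bm e_n$ satisfies $\mathrm{sign}(w_i)=y_i$ for all $i$, so, being a point of $D^{L1}$, it lies on the single facet $\mathrm{conv}\{y_i\bm e_i\}=S_1$ of the cross-polytope; hence $V\subseteq S_1$, and $\bm w^*\in V$ by realizability. This facet is itself a simplex, hence convex, so every later $S_n$ stays inside it and inside $D^{L1}$. For the inductive step I would check that the query point $\bm v_{new}$ (a unit vector, hence a legal query in $D^{L2}$; by construction its orthogonal hyperplane passes through the $n-2$ untouched vertices and through the midpoint $\bm a_{new}$ of the longest edge) cuts $S_n$ into exactly the two sub-simplices produced by midpoint bisection of the longest edge $\bm a_i\bm a_j$, with $\bm a_i$ and $\bm a_j$ on opposite sides. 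Every $\bm w\in V$ must satisfy $\mathrm{sign}(\bm v_{new}\cdot\bm w)=y_{new}$, so $V$ lies on the $y_{new}$-side, which is precisely the sub-simplex the update retains; thus $V\subseteq S_{n+1}$.

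For the contraction I would invoke the classical analysis of longest-edge bisection due to Kearfott: every $O(n)$ successive longest-edge bisections shrink the diameter of the running simplex by a factor $\sqrt3/2$. Since $d(S_1)=\sqrt2$, after $kn$ bisections $d(S_n)\le(\sqrt3/2)^k\sqrt2$, so driving $d(S_n)$ below $\delta$ costs $O(n\log(\sqrt2/\delta))$ bisections. I expect re-deriving Kearfott's bound to be the main obstacle to a self-contained proof; if one instead cites it, the remaining obstacle is just to certify --- via the invariant above --- that what the algorithm does at each step really is midpoint bisection of the longest edge of a bona fide simplex.

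Finally, the output is the centroid $\bar{\bm w}=\frac1n\sum_i\bm a_i\in S_n$, and $\bm w^*\in V\subseteq S_n$, so $\|\bar{\bm w}-\bm w^*\|_2\le d(S_n)$. Since $\bm w^*\in D^{L1}$ gives $\|\bm w^*\|_2\ge 1/\sqrt n$, for $\delta<1/\sqrt n$ any point within distance $\delta$ of $\bm w^*$ subtends an angle at most $\arcsin(\delta\sqrt n)$ with it, so under the uniform distribution the error of $\bar{\bm w}$ is $\angle(\bar{\bm w},\bm w^*)/\pi\le\arcsin(\delta\sqrt n)/\pi$. Taking the stopping rule so that $d(S_n)\le\pi\epsilon/(2\sqrt n)$ --- which is exactly what $n\log_{\sqrt3/2}(\pi\epsilon/(2\sqrt{2n}))$ bisections deliver, since $(\sqrt3/2)^{\log_{\sqrt3/2}(\pi\epsilon/(2\sqrt{2n}))}\sqrt2=\pi\epsilon/(2\sqrt n)$ --- makes the error at most $\arcsin(\pi\epsilon/2)/\pi\le\epsilon$ for $\epsilon$ below an absolute constant (larger $\epsilon$ are trivial). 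Adding the $n$ initialization queries to the $n\log_{\sqrt3/2}(\pi\epsilon/(2\sqrt{2n}))=O(n(\log n+\log 1/\epsilon))$ loop queries gives the stated bound.
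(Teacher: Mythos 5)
Your proposal is correct and follows essentially the same route as the paper: the same invariant that the version space (hence $\bm{w}^*$) stays inside the current simplex, the same appeal to Kearfott's longest-edge-bisection diameter bound with $d(S_1)=\sqrt{2}$, and the same $\|u\|\geq 1/\sqrt{n}$ argument converting a small diameter into a small angle and hence error at most $\epsilon$, yielding the identical threshold $n\log_{\sqrt{3}/2}(\pi\epsilon/(2\sqrt{2n}))$ and total label count $p+n$. The only cosmetic difference is that you keep the bound in the form $\arcsin(\delta\sqrt{n})/\pi$ while the paper linearizes to $2\delta\sqrt{n}/\pi$; both give the same stopping rule.
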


Before proving the theorem, observe that since we assume the data is distributed uniformly over the unit sphere, the generalization error of every hypothesis $v\in D^{L1}$ is the angle between $v$ and $w^{*}$ divided by $\pi$ (as illustrated in Figure 2.):
\[  error(\bm{v}) = \frac{\arccos \frac{\bm{v}\cdot{}\bm{w^{*}}}{||\bm{v}||||\bm{w^{*}}||}}{\pi} = \frac{\theta}{\pi}\]

\begin{figure}[htp]
    \centering
    \includegraphics[width=4cm]{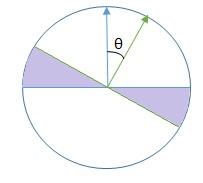}
    \caption{Disagreement region between two classifiers. $\theta$ is the angle between the two vectors.}
    \label{fig:error region}
\end{figure}

Intuitively, all points in $S_n$ are close to $w^{*}$ (when $S_n$ is small enough), and thus would have low generalization errors.

As a part of proving the main theorem, we will prove that the original simplex (at the end of step 2 of the algorithm), defined by the rows of $A$, contains the version space. In \ref{consistency_lemma}, we will also show that in every iteration of the while loop, the new simplex also contains the current version space.

\begin{lemma}
Let $S_1$ be the simplex we got at the end of step 2 of Version Space Minimizer (i.e the convex hull of the rows of A). For every $\bm{b}\in D^{L1}$ that was consistent on $\bm{e_i}$ for every $1 \leq i \leq n$ , we have that $\bm{b} \in S_1$.
\end{lemma}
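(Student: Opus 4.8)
The plan is to exploit the very special structure of $S_1$: by construction its vertices are the signed standard basis vectors $\bm{a_i} = y_i \bm{e_i}$, where $y_i = \mathrm{sign}(\bm{e_i}\cdot \bm{w^{*}})$ is the label returned in step 2 of the algorithm. First I would unpack what consistency of $\bm{b}$ on $\bm{e_i}$ means: since $\bm{b}\cdot\bm{e_i} = b_i$, consistency says $\mathrm{sign}(b_i) = y_i$, and therefore $y_i b_i = |b_i| \geq 0$ for every $1 \leq i \leq n$.

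Next I would write $\bm{b}$ in the standard basis and regroup the terms using $y_i^2 = 1$:
\[
\bm{b} \;=\; \sum_{i=1}^n b_i \bm{e_i} \;=\; \sum_{i=1}^n (y_i b_i)\,(y_i \bm{e_i}) \;=\; \sum_{i=1}^n |b_i|\,\bm{a_i}.
\]
This exhibits $\bm{b}$ as a nonnegative linear combination of the vertices $\bm{a_1},\dots,\bm{a_n}$ of $S_1$, with coefficients $|b_i|$.

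Finally I would invoke the hypothesis $\bm{b}\in D^{L1}$, which gives $\sum_{i=1}^n |b_i| = \|\bm{b}\|_1 = 1$. Hence the combination above is in fact a convex combination of $\bm{a_1},\dots,\bm{a_n}$, so $\bm{b}$ lies in their convex hull, which is exactly $S_1$. That completes the argument.

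I don't expect a genuine obstacle here — the statement is essentially a restatement of the fact that the positive/negative signs recorded in $A$ pin $\bm{b}$ down to one orthant of the $\ell_1$ sphere. The only minor points to handle are the boundary case $b_i = 0$ (the inequality $y_i b_i \geq 0$ still holds, so the computation is unaffected, modulo fixing a convention for $\mathrm{sign}(0)$), and observing in passing that the same identity shows $S_1 \subseteq D^{L1}$: the $i$-th coordinate of $\sum_i |b_i|\bm{a_i}$ is $y_i|b_i|$, so its $\ell_1$ norm equals $\sum_i |b_i|$, which is $1$ whenever the coefficients sum to $1$.
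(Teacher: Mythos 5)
Your proof is correct and follows essentially the same route as the paper's: both use consistency to identify $\mathrm{sign}(b_i)=y_i$, rewrite $\bm{b}=\sum_i b_i\bm{e_i}$ as $\sum_i |b_i|\bm{a_i}$, and conclude via $\sum_i|b_i|=\|\bm{b}\|_1=1$ that this is a convex combination of the vertices of $S_1$. Your explicit handling of the $b_i=0$ boundary case is a small tidiness improvement over the paper's write-up, but the argument is the same.
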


\begin{proof}
By assumption we have that for every $1 \leq i \leq n$ , $sign(\bm{b} \cdot{} \bm{e_i}) = y_i$. This implies that $sign(\bm{b} \cdot{} \bm{e_i}) = sign(b_i) = y_i$. By definition of “Version Space Minimizer” we have that $a_i$ , the $i^{th}$ row of A, satisfies that $a_i = y_i \cdot{} \bm{e_i}$. Therefore, for every i, take $|b_i|a_i$, and we see that $|b_i|\bm{a_i} =|b_i|y_i\bm{e_i} =|b_i|sign(b_i)\bm{e_i} = b_i\bm{e_i}$ and since $\sum_{i=1}^{n} |b_i|=1$ we'll get that $\bm{b}= \sum_{i=1}^{n} b_i\bm{e_i} = \sum_{i=1}^{n} |b_i|\bm{a_i} \in S_1$ 
\end{proof}

\begin{lemma}
\label{consistency_lemma}
At every iteration $n \geq 0$ of the while loop, we have that every hypothesis $\bm{b} \in S_n$  that was consistent on $\bm{v_{new}}$ (i.e $sign(\bm{b} \cdot \bm{v_{new}}) = y_{new}$) satisfies that $\bm{b} \in S_{n+1}$ (the convex hull of the rows of A at the end of the iteration).
\end{lemma}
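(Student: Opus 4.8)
The plan is to exhibit $S_{n+1}$ as one of the two sub-simplices into which the midpoint $\bm{a_{new}}$ of the longest edge splits $S_n$, and then to show that this sub-simplex is precisely the one lying on the $y_{new}$ side of the hyperplane $H_{new}=\{\bm{w}:\bm{w}\cdot\bm{v_{new}}=0\}$. Write the vertices of $S_n$ as $\bm{a_1},\dots,\bm{a_n}$, let $\bm{a_i},\bm{a_j}$ be the endpoints of the longest edge, so $\bm{a_{new}}=\tfrac12(\bm{a_i}+\bm{a_j})$, and put
\[
P_i=\mathrm{conv}\!\big(\{\bm{a_k}:k\neq j\}\cup\{\bm{a_{new}}\}\big),\qquad P_j=\mathrm{conv}\!\big(\{\bm{a_k}:k\neq i\}\cup\{\bm{a_{new}}\}\big),
\]
i.e. $P_i$ retains $\bm{a_i}$ and $P_j$ retains $\bm{a_j}$. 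Inspecting the update rule, $S_{n+1}=P_i$ exactly when the algorithm executes $\bm{a_j}\leftarrow\bm{a_{new}}$ (the branch $y_{new}=sign(\bm{a_i}\cdot\bm{v_{new}})$), and $S_{n+1}=P_j$ otherwise. The first ingredient is the standard bisection identity $S_n=P_i\cup P_j$: given $\bm{b}=\sum_k\lambda_k\bm{a_k}$ with $\lambda_k\geq 0$ and $\sum_k\lambda_k=1$, assume w.l.o.g. $\lambda_i\leq\lambda_j$ and rewrite $\lambda_i\bm{a_i}+\lambda_j\bm{a_j}=2\lambda_i\bm{a_{new}}+(\lambda_j-\lambda_i)\bm{a_j}$; this presents $\bm{b}$ as a convex combination of the vertices of $P_j$ (the symmetric case gives $P_i$). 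Note also $P_i\cap P_j=F:=\mathrm{conv}(\{\bm{a_k}:k\neq i,j\}\cup\{\bm{a_{new}}\})$.

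Next I would invoke the defining property of $\bm{v_{new}}$: it is the unit normal of the hyperplane spanned by $\{\bm{a_k}:k\neq i,j\}$ together with $\bm{a_{new}}$, hence $\bm{v_{new}}\cdot\bm{a_k}=0$ for $k\neq i,j$ and $\bm{v_{new}}\cdot\bm{a_{new}}=0$; in particular $F\subset H_{new}$. Combining $\bm{v_{new}}\cdot\bm{a_{new}}=0$ with $\bm{a_{new}}=\tfrac12(\bm{a_i}+\bm{a_j})$ gives $\bm{a_i}\cdot\bm{v_{new}}=-(\bm{a_j}\cdot\bm{v_{new}})$, and both are nonzero: if, say, $\bm{a_i}$ lay in the span defining $H_{new}$, then since $\bm{a_{new}}$ does so would $\bm{a_j}$, contradicting linear independence of $\bm{a_1},\dots,\bm{a_n}$ --- an invariant of the loop, since $\bm{a_j}\leftarrow\tfrac12(\bm{a_i}+\bm{a_j})$ preserves linear independence and it holds at $S_1$ (vertices $\pm\bm{e_i}$). (The orientation ambiguity in $\bm{v_{new}}$ is harmless: it flips $y_{new}$ and $sign(\bm{a_i}\cdot\bm{v_{new}})$ simultaneously.) Hence $sign(\bm{a_i}\cdot\bm{v_{new}})=-sign(\bm{a_j}\cdot\bm{v_{new}})\in\{-1,1\}$. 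Now for $\bm{b}\in P_i$ written as $\sum_{k\neq i,j}\mu_k\bm{a_k}+\mu_{new}\bm{a_{new}}+\mu_i\bm{a_i}$ with nonnegative coefficients summing to $1$, every term except the last is annihilated by $\bm{v_{new}}$, so $\bm{b}\cdot\bm{v_{new}}=\mu_i(\bm{a_i}\cdot\bm{v_{new}})$; thus $\bm{b}\cdot\bm{v_{new}}$ has the sign of $\bm{a_i}\cdot\bm{v_{new}}$ when $\mu_i>0$ and is $0$ (indeed $\bm{b}\in F$) when $\mu_i=0$. Symmetrically, on $P_j$ the sign of $\bm{b}\cdot\bm{v_{new}}$ equals that of $\bm{a_j}\cdot\bm{v_{new}}$ off $F$.

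Finally I would assemble these. Let $\bm{b}\in S_n$ be consistent with $y_{new}$, i.e. $sign(\bm{b}\cdot\bm{v_{new}})=y_{new}$; then $\bm{b}\cdot\bm{v_{new}}\neq 0$, so $\bm{b}\notin F$, and by $S_n=P_i\cup P_j$ it lies in exactly one of $P_i,P_j$. If the algorithm took the branch $\bm{a_j}\leftarrow\bm{a_{new}}$, then $S_{n+1}=P_i$ and $y_{new}=sign(\bm{a_i}\cdot\bm{v_{new}})$; were $\bm{b}\in P_j$, the previous paragraph (with $\mu_j>0$, since $\bm{b}\notin F$) would force $sign(\bm{b}\cdot\bm{v_{new}})=sign(\bm{a_j}\cdot\bm{v_{new}})=-sign(\bm{a_i}\cdot\bm{v_{new}})=-y_{new}$, a contradiction; hence $\bm{b}\in P_i=S_{n+1}$. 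The branch $\bm{a_i}\leftarrow\bm{a_{new}}$ is obtained by exchanging the roles of $i$ and $j$.

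I expect the only genuine obstacle to be the non-degeneracy bookkeeping: checking that $\bm{v_{new}}$ is well defined (the $n-1$ vectors spanning $H_{new}$ are independent), that $\bm{a_i}\cdot\bm{v_{new}}$ and $\bm{a_j}\cdot\bm{v_{new}}$ are nonzero, and that linear independence of the vertex set is maintained through every bisection so that all barycentric representations used above are unique. Everything else is the one-line convexity computation plus sign bookkeeping; the boundary case $\bm{b}\cdot\bm{v_{new}}=0$ never arises for a $\bm{b}$ consistent with $y_{new}\in\{-1,1\}$, and such a $\bm{b}$ would in any case lie on the shared facet $F\subseteq S_{n+1}$.
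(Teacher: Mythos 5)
Your proof is correct and takes essentially the same route as the paper's: it rests on the same two orthogonality facts ($\bm{v_{new}}\cdot\bm{a_k}=0$ for $k\neq i,j$ and $\bm{a_i}\cdot\bm{v_{new}}=-(\bm{a_j}\cdot\bm{v_{new}})$) and the same convex-combination identity $\lambda_i\bm{a_i}+\lambda_j\bm{a_j}=2\lambda_i\bm{a_{new}}+(\lambda_j-\lambda_i)\bm{a_j}$, merely organized as ``bisect $S_n$ into two sub-simplices, then locate $\bm{b}$ by the sign of $\bm{b}\cdot\bm{v_{new}}$'' instead of the paper's direct construction of the new barycentric coefficients $d_i=c_i-c_j$, $d_j=2c_j$. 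The one point where you go beyond the paper is the non-degeneracy bookkeeping (that $\bm{a_i}\cdot\bm{v_{new}}\neq 0$ and that linear independence of the vertices is preserved by each bisection), which the paper's proof uses implicitly but never states.
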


\begin{proof}
Write $\bm{b}$ as: 
\[\bm{b} = \sum_{k=1}^{n} c_k\bm{a_k}\] 
such that for every $1 \leq k \leq n$ we have that $c_k >0$, and $\sum_{k=1}^{n} c_k = 1$. We need to find $d_k$ such that:
\[\bm{b} = \sum_{k=1}^{n} d_k\bm{a'_k}\] where $\bm{a'_k}$ is the $k^{th}$ row of A at the end of the iteration, and $d_k$ satisfies that $d_k > 0 ,\sum_{k=1}^{n} d_k = 1$.
Observe that: 
\[\bm{b} \cdot \bm{v_{new}} = (\sum_{k=1}^{n} c_k\bm{a_k}) \cdot \bm{v_{new}}  = (c_i\bm{a_i}) \cdot \bm{v_{new}} +(c_j\bm{a_j}) \cdot \bm{v_{new}}\] where the second equality holds since for every $1 \leq k \leq n$ such that $k \neq i,j$, we have by the definition of $\bm{v_{new}}$ that $\bm{v_{new}} \cdot \bm{a_k} = 0$. We also have that: 
\[\bm{a_{new}} \cdot \bm{v_{new}} = \frac{1}{2}(\bm{a_j} + \bm{a_i}) \cdot \bm{v_{new}} = 0\] which implies that $\bm{a_i} \cdot \bm{v_{new}} = -(\bm{a_j} \cdot \bm{v_{new}})$. W.l.o.g assume that $y_{new} = 1$. We get that: 
\[1 = y_{new} = sign(\bm{b} \cdot \bm{v_{new}}) = sign((c_i\bm{a_i}) \cdot \bm{v_{new}} + (c_j\bm{a_j}) \cdot \bm{v_{new}})\] 
Therefore:
\begin{equation}
\begin{split}
(c_i\bm{a_i}) \cdot \bm{v_{new}} +(c_j\bm{a_j}) \cdot \bm{v_{new}} & =
(c_i\bm{a_i}) \cdot \bm{v_{new}} - (c_j\bm{a_i}) \cdot \bm{v_{new}} \\
& = (\bm{a_i} \cdot \bm{v_{new}})(c_i - c_j) \geq 0 
\end{split}
\end{equation}
Let us consider two cases. If $\bm{a_i} \cdot \bm{v_{new}} \geq 0$ in order for the above inequality to hold we must have that $c_i \geq c_j$. Furthermore, since $sign(\bm{a_i} \cdot \bm{v_{new}}) = 1 = y_{new}$, we get by the definition of the algorithm that the $j^{th}$ row of $A$ will be set to $\bm{a_{new}}$. Therefore, for every $1 \leq k \leq n$ such that $k \neq i,j$, let $d_k = c_k$ and $d_i = c_i - c_j$, $d_j =2c_j$.
Note that $d_t \geq 0$, for every $1 \leq t \leq n$ (since $c_i \geq c_j$), and also that: $\sum_{k=1}^{n} d_k = \sum_{k=1}^{n} c_k = 1$. 
Since for every $1 \leq k \leq n$ such that $k \neq i,j$, we have that $a_k =a'_k$ as well as $d_k = c_k$, it's enough to see that:
\begin{equation}
\begin{split}
d_i\bm{a'_i} + d_j\bm{a'_j} & = (c_i - c_j)\bm{a_i} +2c_j\bm{a_{new}} \\
& = (c_i - c_j)\bm{a_i} + 2c_j(\frac{1}{2}(\bm{a_j} + \bm{a_i})) = c_i\bm{a_i} + c_j\bm{a_j}
\end{split}
\end{equation}

Therefore, we get that $b = \sum_{k=1}^{n} c_k\bm{a_k} = \sum_{k=1}^{n} d_k\bm{a'_k}$. The proof in the case that $\bm{a_i} \cdot \bm{v_{new}} < 0$ is similar (in that case we have $c_j \geq c_i$, and $\bm{a'_i} = \bm{a_{new}}$).
\end{proof}

\begin{corollary}
\label{cor_1}
We know by the realizability assumption that $w^{*} \in V$, and thus is also contained in the returned simplex.
\end{corollary}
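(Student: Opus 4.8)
The plan is to combine the two preceding lemmas with a straightforward induction on the iteration count of the while loop. The first assertion, that $\bm{w^*} \in V$, is immediate from definitions: $V$ is the set of hypotheses consistent with every query the algorithm has made so far, and the realizability assumption states precisely that $sign(\bm{x}\cdot\bm{w^*})$ equals the true label of $\bm{x}$ for every $\bm{x}\in D^{L2}$. Since every instance the algorithm ever queries (the standard basis vectors $\bm{e_i}$ in step 2 and each $\bm{v_{new}}$ in the while loop) lies on $D^{L2}$, $\bm{w^*}$ is consistent with all of them, so $\bm{w^*}\in V$.

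For the second assertion I would show by induction that $\bm{w^*}\in S_n$ for every $n\ge 1$. Base case $n=1$: $\bm{w^*}\in D^{L1}$ by hypothesis, and by realizability $sign(\bm{w^*}\cdot\bm{e_i})=y_i$ for each $1\le i\le n$, so the first lemma applies with $\bm{b}=\bm{w^*}$ and gives $\bm{w^*}\in S_1$. Inductive step: assume $\bm{w^*}\in S_n$. The point $\bm{v_{new}}$ queried in iteration $n$ is, by construction, a unit vector and hence lies on $D^{L2}$, so realizability gives $sign(\bm{w^*}\cdot\bm{v_{new}})=y_{new}$, i.e.\ $\bm{w^*}$ is consistent on $\bm{v_{new}}$. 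Lemma~\ref{consistency_lemma} then yields $\bm{w^*}\in S_{n+1}$, closing the induction. Since the returned simplex is simply the value of $S_n$ at loop termination, we conclude $\bm{w^*}$ lies in the returned simplex.

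There is essentially no hard step here; the corollary is a bookkeeping consequence of the two lemmas plus realizability. The only points that deserve a moment's care are (i) checking that each queried instance genuinely lies on $D^{L2}$ so that the realizability hypothesis is applicable — true for the $\bm{e_i}$ by definition and for $\bm{v_{new}}$ because the algorithm explicitly normalizes it — and (ii) noting that $\bm{w^*}\in D^{L1}$ is needed to invoke the first lemma in the base case. Once these are observed, the induction runs through immediately.
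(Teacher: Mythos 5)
Your proof is correct and matches the paper's intent exactly: the paper states this as an immediate corollary of the two preceding lemmas, and your explicit induction (base case from the first lemma, inductive step from Lemma~\ref{consistency_lemma}, with realizability guaranteeing $\bm{w^*}$ is consistent on every queried point) is precisely the argument being invoked. No gaps.
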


We will now prove a Lemma for bounding the maximum angle between two vectors in the returned simplex, as a function of the diameter and the dimension. (note that this will bound the generalization error as we know that $\bm{w^{*}}$ is in that simplex).

\begin{lemma}
\label{max_angle_lemma}
For small enough $\delta > 0$, assume $d(S_n) \leq \delta$. For every $\bm{w},\bm{u} \in S_n$ the angle $\theta$ between them is bounded by $2\delta\sqrt{n}$.
\end{lemma}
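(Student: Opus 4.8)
The plan is to combine two ingredients: a uniform lower bound on the Euclidean norm of every point of $S_n$ (which holds because $S_n$ contains $\bm{w}^{*}\in D^{L1}$ and $S_n$ is small), together with the elementary planar fact that a short chord subtends a small angle at the origin.

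First I would invoke Corollary~\ref{cor_1} to get $\bm{w}^{*}\in S_n$. Since $\bm{w}^{*}\in D^{L1}$ we have $\|\bm{w}^{*}\|_1 = 1$, and by Cauchy--Schwarz $\|\bm{x}\|_2 \ge \|\bm{x}\|_1/\sqrt{n}$ for every $\bm{x}\in\mathbb{R}^{n}$, so $\|\bm{w}^{*}\|_2 \ge 1/\sqrt{n}$. Then for any $\bm{x}\in S_n$ the reverse triangle inequality gives $\|\bm{x}\|_2 \ge \|\bm{w}^{*}\|_2 - \|\bm{x}-\bm{w}^{*}\|_2 \ge \frac{1}{\sqrt{n}} - d(S_n) \ge \frac{1}{\sqrt{n}} - \delta$, which is strictly positive (indeed $>\delta$) as soon as $\delta < \frac{1}{2\sqrt{n}}$. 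In particular both $\|\bm{w}\|_2$ and $\|\bm{u}\|_2$ are at least $\frac{1}{\sqrt{n}}-\delta$.

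Next I would use the following observation: for nonzero $\bm{a}$ and any $\bm{b}$ with $\|\bm{b}-\bm{a}\| \le \rho < \|\bm{a}\|$, the angle $\angle(\bm{a},\bm{b})$ at the origin cannot exceed the angle of the ray through the origin tangent to the Euclidean $\rho$-ball around $\bm{a}$, which gives $\sin\angle(\bm{a},\bm{b}) \le \rho/\|\bm{a}\|$ (and in particular $\angle(\bm{a},\bm{b}) \le \pi/2$). Applying this with $\bm{a}=\bm{u}$, $\bm{b}=\bm{w}$, and $\rho = \|\bm{w}-\bm{u}\| \le d(S_n) \le \delta$, I obtain $\sin\theta \le \frac{\|\bm{w}-\bm{u}\|}{\|\bm{u}\|} \le \frac{\delta}{\frac{1}{\sqrt{n}}-\delta}$, and hence $\theta \le \arcsin\!\Big(\frac{\delta}{\frac{1}{\sqrt{n}}-\delta}\Big)$.

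It then remains to bound $\arcsin\!\big(\delta/(\tfrac{1}{\sqrt{n}}-\delta)\big)$ by $2\delta\sqrt{n}$. Writing $t := \delta\sqrt{n}$, the argument of the arcsine equals $t/(1-t) = t + O(t^{2})$, and since $\arcsin$ is smooth with $\arcsin(0)=0$ and $\arcsin'(0)=1$ we get $\arcsin\!\big(t/(1-t)\big) = t + O(t^{2}) \le 2t = 2\delta\sqrt{n}$ for all $t$ --- equivalently all $\delta$ --- below an explicit threshold (which also comfortably enforces $\delta < \frac{1}{2\sqrt{n}}$). This is exactly the ``small enough $\delta$'' of the statement, and it is the only step that needs any care; note that the true constant is essentially $1$ rather than $2$, so there is ample slack. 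The sole genuine bookkeeping is pinning down a single admissible range of $\delta$ on which both the norm lower bound and the arcsine estimate hold simultaneously; everything else is just the reverse triangle inequality and the tangent-line picture.
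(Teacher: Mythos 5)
Your proof is correct and follows essentially the same route as the paper's: the tangent-line picture giving $\sin\theta \le \|\bm{w}-\bm{u}\|/\|\bm{u}\|$, the norm comparison $\|\cdot\|_2 \ge \|\cdot\|_1/\sqrt{n}$ to control the denominator, and a small-angle estimate to absorb the factor of $2$. The only (harmless) difference is that the paper lower-bounds $\|\bm{u}\|$ directly by $1/\sqrt{n}$ using $\bm{u}\in D^{L1}$, whereas you route through $\bm{w}^{*}\in S_n$ and the reverse triangle inequality to get the slightly weaker bound $1/\sqrt{n}-\delta$, which costs nothing once $\delta$ is small.
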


\begin{proof}
Let $\bm{w},\bm{u} \in S_n$. Note that since $d(S_n) \leq \delta$ we get that $S_n \subseteq B_{\delta}^{L2}(\bm{u})$.
Observe that in general, for every $\bm{v} \in \mathbb{R}^{n}$ we have that the maximum angle between $\bm{v}$ and every other $\bm{r} \in B_{\delta}^{L2}(\bm{v})$ is $\arcsin{\frac{\delta}{||v||}}$ (Figure 3.). Since $\bm{w} \in S_n \subseteq B_{\delta}^{L2}(\bm{u})$  we get that:
\begin{equation}
\label{num_3}
\theta \leq \arcsin{\frac{\delta}{||u||}}   
\end{equation}

Since $0 \leq \arcsin{\frac{\delta}{||u||}} \leq \frac{\pi}{2}$ for small enough $\delta$, and $\sin$ is monotonic increasing on $[0,\frac{\pi}{2}]$ we can apply $\sin$ to \ref{num_3} and get:
\begin{equation}
\sin{\theta} \leq \frac{\delta}{||u||}   
\end{equation}
Furthermore, since $\bm{u} \in D^{L1}$ we get that $||\bm{u}|| \geq \frac{1}{\sqrt{n}}$ (Since the minimum norm on the L1 unit sphere is obtained when all the coordinates = $\frac{1}{n}$) and so we get that:
\begin{equation}
\sin{\theta} \leq \delta \sqrt{n}  
\end{equation}

Finally, for small enough $\delta$ we have that $2\sin{\theta} > \theta$ and so:
\begin{equation}
\theta \leq 2\sin{\theta} \leq 2\delta \sqrt{n}  
\end{equation}

\end{proof}

\begin{figure}[htp]
    \centering
    \includegraphics[width=6cm]{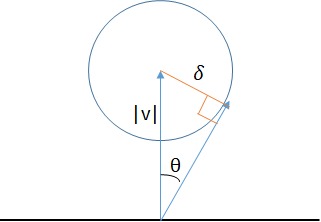}
    \caption{Maximum angle within $\delta$ radius ball }
    \label{fig:maximum angle}
\end{figure}

We will make use of the following theorem that was proven by Baker Kearfott in \cite{10.2307/2006341} regarding the decrease rate of the diameter of the simplex obtained by the algorithm.

\begin{theorem}
\label{diam_bound}
Let $S$ be an n-simplex in $\mathbb{R}^{n}$ and $S_p$ is any simplex produced after p bisections of $S$ (longest edge bisections), then the diameter of $S_p$ is no greater than $(\frac{\sqrt{3}}{2})^{\floor{\frac{p}{n}}}$ times the diameter of $S$.
\end{theorem}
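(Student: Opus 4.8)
\textbf{Proof proposal for Theorem \ref{diam_bound}.}

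The plan is to prove the diameter-shrinkage bound for longest-edge bisection directly, by controlling how the \emph{longest edge} of the simplex shrinks as a function of the number of bisections. The core quantity to track is the length of the longest edge, since by the remark in the preliminaries the diameter of a simplex equals its longest edge. I would set $\delta_0 = d(S)$ and, for each $p$, let $\delta_p = d(S_p)$ denote the longest edge length after $p$ bisections. The goal is the bound $\delta_p \leq (\tfrac{\sqrt{3}}{2})^{\floor{p/n}}\,\delta_0$, so it suffices to show that the longest edge strictly contracts by a factor of at least $\tfrac{\sqrt{3}}{2}$ over any window of $n$ consecutive bisections.

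The first key step is a single-bisection geometric estimate. When the longest edge, say between vertices $\bm{a_i}$ and $\bm{a_j}$ of length $L$, is cut at its midpoint $\bm{a_{new}} = \tfrac12(\bm{a_i}+\bm{a_j})$, each of the two child simplices replaces one of $\bm{a_i},\bm{a_j}$ by $\bm{a_{new}}$. I would bound the length of any \emph{new} edge created this way — an edge from $\bm{a_{new}}$ to some other vertex $\bm{a_k}$. By the triangle/median geometry, the median from $\bm{a_k}$ to the midpoint of side $\bm{a_i}\bm{a_j}$ has length
\begin{equation}
\|\bm{a_k} - \bm{a_{new}}\|^2 = \tfrac{1}{2}\|\bm{a_k}-\bm{a_i}\|^2 + \tfrac{1}{2}\|\bm{a_k}-\bm{a_j}\|^2 - \tfrac14\|\bm{a_i}-\bm{a_j}\|^2.
\end{equation}
Using that $L = \|\bm{a_i}-\bm{a_j}\|$ was the \emph{longest} edge, so $\|\bm{a_k}-\bm{a_i}\| \leq L$ and $\|\bm{a_k}-\bm{a_j}\| \leq L$, this immediately gives $\|\bm{a_k}-\bm{a_{new}}\|^2 \leq L^2 - \tfrac14 L^2 = \tfrac34 L^2$, hence $\|\bm{a_k}-\bm{a_{new}}\| \leq \tfrac{\sqrt3}{2}L$. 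Thus every newly created edge has length at most $\tfrac{\sqrt3}{2}$ times the edge that was just bisected, and the two halves of the cut edge have length $\tfrac12 L \leq \tfrac{\sqrt3}{2}L$ as well. So no single bisection can create an edge exceeding $\tfrac{\sqrt3}{2}$ times the edge being cut; all other (untouched) edges are unchanged.

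The second step is the amortization over $n$ bisections, which is where the factor $\floor{p/n}$ and the real subtlety enter. A single bisection only guarantees that the \emph{newly created} edges are short; the original longest edge may still persist in the sibling or in descendants that did not cut it. The standard Kearfott argument is that within any consecutive run of $n$ longest-edge bisections, every edge present at the start of the run must eventually be selected and cut, because edges that are never chosen as the longest must be dominated by progressively shorter selected edges; counting dimensions, an $n$-simplex has its longest edge refreshed at least once per $n$ bisections. I would make this precise by a marking argument: track which of the original ``long'' edges survive, and show that after $n$ bisections each surviving long edge has been replaced by edges of length at most $\tfrac{\sqrt3}{2}$ times the original diameter, using the single-bisection bound from Step 1 together with the fact that any edge shorter than the current longest is never the obstruction. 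Chaining this over $\floor{p/n}$ complete windows yields $\delta_p \leq (\tfrac{\sqrt3}{2})^{\floor{p/n}}\delta_0$, and the leftover $p - n\floor{p/n} < n$ bisections can only decrease the diameter further, so the bound holds as stated.

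The main obstacle is precisely this second step: the single-bisection contraction is an elementary median computation, but converting ``each bisection shortens only the cut edge'' into ``the global diameter contracts by $\tfrac{\sqrt3}{2}$ every $n$ steps'' requires the careful combinatorial bookkeeping of which long edges can survive a window of bisections. This is the heart of Kearfott's theorem, and since the statement explicitly attributes the result to \cite{10.2307/2006341}, I would present Steps 1--2 in the amortized form above and cite \cite{10.2307/2006341} for the full combinatorial verification of the per-window contraction, rather than reproducing the entire survival-counting argument in detail.
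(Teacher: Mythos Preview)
The paper does not prove Theorem~\ref{diam_bound} at all: it is stated as a black-box result of Kearfott and cited to \cite{10.2307/2006341}, with no argument given. Your proposal therefore already goes further than the paper, and your final move---sketch the geometry and cite \cite{10.2307/2006341} for the combinatorics---is exactly consistent with how the paper treats the result.

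Your Step~1 is clean and correct: the Apollonius/median identity gives the $\tfrac{\sqrt3}{2}$ bound on every edge incident to the new midpoint, and the two half-edges are trivially $\leq \tfrac12 L$. This is indeed the elementary geometric input in Kearfott's proof.

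Two small inaccuracies in your Step~2 narrative are worth flagging, even though you ultimately defer to the reference. First, the sentence ``the original longest edge may still persist in the sibling'' is off: after bisecting edge $\bm{a_i}\bm{a_j}$, \emph{neither} child retains that edge---both replace one endpoint by $\bm{a_{new}}$. What can persist unchanged are the \emph{other} edges of length close to $L$, and that is the real obstruction to single-step diameter decrease. Second, the heuristic ``within any run of $n$ bisections, every edge present at the start must eventually be selected and cut'' cannot be literally true by counting: a simplex with $n$ vertices has $\binom{n}{2}$ edges, not $n$. Kearfott's actual argument is not that every edge is cut, but rather a more delicate tracking of which edges can remain of length $> \tfrac{\sqrt3}{2}\,\delta_0$; the point is that only edges already present at the start of the window can be that long (new edges are short by Step~1), and the longest among them is removed at each step, exhausting them in at most $n$ steps along any descent path. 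You correctly identify this bookkeeping as the crux and cite it out, which matches the paper's own treatment.
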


We are now ready to prove theorem \ref{main}.
\begin{proof}
By corollary \ref{cor_1} we know that $w^{*}$ is contained in the returned simplex. Together with lemma \ref{max_angle_lemma} we know that if the returned simplex $S_n$ satisfies that $d(S_n) \leq \delta$ we have that the maximum angle between $w^{*}$ and any other $u \in S_n$ is bound by:
\begin{equation}
\theta \leq 2\delta \sqrt{n}  
\end{equation}
Since for every $u \in S_n$ we have that:
\[error(u) = \frac{\arccos \frac{u\cdot{}w^{*}}{||u||||w^{*}||}}{\pi} = \frac{\theta}{\pi}\]
So to ensure at most $\epsilon$ error we can bound:
\begin{equation}
\frac{2\delta \sqrt{n}}{\pi} < \epsilon  
\end{equation}
Rearranging, we get:
\begin{equation}
\delta < \frac{\pi\epsilon{}}{2\sqrt{n}}  
\end{equation}

Notice that the diameter of $S_1$ (the initial simplex we get after step 2 of the algorithm) is equal to $\sqrt{2}$ (since the distance between $\pm \bm{e_i}, \bm{e_j}$, such that $i\neq j$ is $\sqrt{2}$).
Therefore, in order to bound the diameter we will apply theorem \ref{diam_bound} and find $p$ such the following holds:
\begin{equation}
\sqrt{2}(\frac{\sqrt{3}}{2})^{\floor{\frac{p}{n}}} < \frac{\pi\epsilon{}}{2\sqrt{n}} 
\end{equation}
Rearrange:
\begin{equation}
\label{fianl_task}
 p > n\log_{\frac{\sqrt{3}}{2}}{\frac{\pi\epsilon}{2\sqrt{2n}}} = O(n(\log{n} + \log{1/\epsilon}))   
\end{equation}
And we obtained that after $p$ iterations for a $p$ which \ref{fianl_task} holds for, the generalization error of every hypothesis in $S_p$, is at most $\epsilon$.
Finally note that the amount of labels the algorithm queries is equal to:
\[p + n\]
Where $p$ is the number of iterations of the while loop (and note that we query one sample in each iteration), and $n$ queries for step 2. Therefore, the total number of labels used by the algorithm is also:
\[O(n(\log{n} + \log{1/\epsilon}))\]

\end{proof}

It is important to note that this bound is not probabilistic, unlike previous results involving active learning of halfspaces.
\begin{remark}
Note that it can be shown that $\Omega(n\cdot(\log{\frac{1}{\epsilon}}))$ is a lower bound for learning halfspaces (using a sphere counting argument, see \cite{10.5555/1577069.1577080}), therefore the algorithm's label usage is near optimal.
\end{remark}

\section{EXPERIMENTS}

In this section we will show the effectiveness of our proposed algorithm on synthetic data (distributed uniformly over $D^{L2}$). In particular, we will compare our algorithm with uncertainty sampling, as well as random sampling from $D^{L2}$.

\begin{remark}
In order to calculate the normal vector in every iteration, we used the SVD decomposition of A (the vertices of the simplex), which is the most demanding (computational wise) part of the algorithm. Therefore, the running time of the algorithm grows polynomially with the dimension.
\end{remark}

In figure \ref{fig:experiment}, we compare the results of our algorithm with the classic idea of uncertainty sampling. In our case, since we allow membership queries, in every iteration we query a random vector that is orthogonal to the current classifier, and then compute SVM on the extended training set (this is equivalent for querying a vector with $0$ distance from the separating hyperplane). In figure \ref{fig:experiment} it can be seen that our method significantly out-performs uncertainty sampling, and achieving exponential gain when compared to random sampling.

\begin{figure}[H]
     \centering
     \begin{subfigure}[b]{0.45\textwidth}
         \centering
         \includegraphics[width=\textwidth]{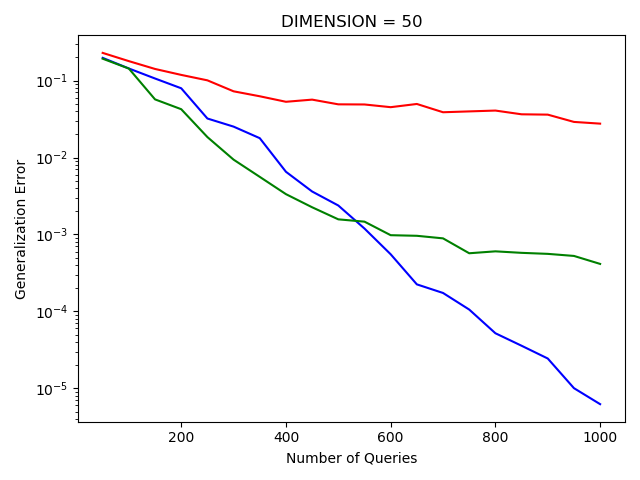}
         \label{fig:y equals x}
     \end{subfigure}
     \hfill
     \begin{subfigure}[b]{0.45\textwidth}
         \centering
         \includegraphics[width=\textwidth]{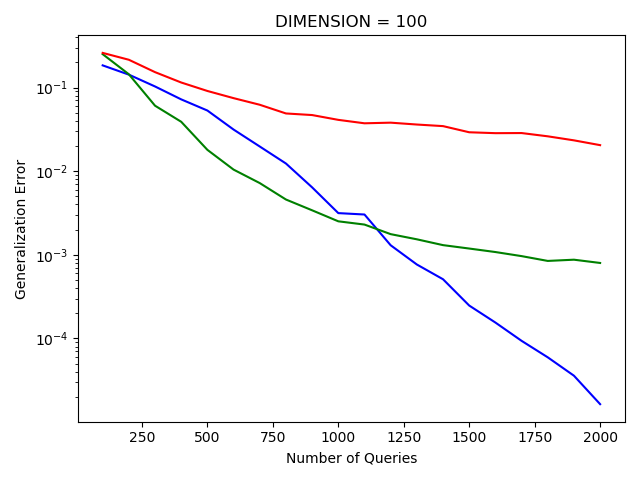}
         \label{fig:three sin x}
     \end{subfigure}
     \caption{In this graph the estimation error (in log scale) is plotted against the number of queries. Red: Random Sampling, Green: Uncertainty Sampling, Blue: Our Algorithm.}
     \label{fig:experiment}
\end{figure}


\section{CONCLUSION}
We presented the "Version Space Minimizer" algorithm for learning homogeneous halfspaces, and analyzed it's label complexity - showing it provably achieves near optimal label complexity when the input space is uniformly distributed over the unit hypersphere. We also demonstrated that the algorithm works well in practice, and significantly out-performs uncertainty sampling.

\bibliographystyle{unsrt}
\bibliography{bib}

\begin{thebibliography}{1}

\bibitem{Settles}
Burr Settles.
\newblock Active learning literature survey.
\newblock 07 2010.

\bibitem{10.1145/130385.130417}
H.~S. Seung, M.~Opper, and H.~Sompolinsky.
\newblock Query by committee.
\newblock In {\em Proceedings of the Fifth Annual Workshop on Computational
  Learning Theory}, COLT '92, page 287–294, New York, NY, USA, 1992.
  Association for Computing Machinery.

\bibitem{10.1162/153244302760185243}
Simon Tong and Daphne Koller.
\newblock Support vector machine active learning with applications to text
  classification.
\newblock {\em J. Mach. Learn. Res.}, 2:45–66, March 2002.

\bibitem{10.5555/1577069.1577080}
Sanjoy Dasgupta, Adam~Tauman Kalai, and Claire Monteleoni.
\newblock Analysis of perceptron-based active learning.
\newblock {\em J. Mach. Learn. Res.}, 10:281–299, June 2009.

\bibitem{10.5555/2886521.2886666}
Ibrahim Alabdulmohsin, Xin Gao, and Xiangliang Zhang.
\newblock Efficient active learning of halfspaces via query synthesis.
\newblock In {\em Proceedings of the Twenty-Ninth AAAI Conference on Artificial
  Intelligence}, AAAI'15, page 2483–2489. AAAI Press, 2015.

\bibitem{10.1007/978-3-540-72927-3_5}
Maria-Florina Balcan, Andrei Broder, and Tong Zhang.
\newblock Margin based active learning.
\newblock In Nader~H. Bshouty and Claudio Gentile, editors, {\em Learning
  Theory}, pages 35--50, Berlin, Heidelberg, 2007. Springer Berlin Heidelberg.

\bibitem{10.2307/2006341}
Baker Kearfott.
\newblock A proof of convergence and an error bound for the method of bisection
  in $r^n$.
\newblock {\em Mathematics of Computation}, 32(144):1147--1153, 1978.

\end{thebibliography}

\section{Appendix}
\subsection*{Label bound for separability with margin}
In this appendix we will show a simple extension for the label bound analysis, for distributions separable with margin.
\begin{definition*}
We say that distribution $D$ is separable with margin $\gamma$, if there exists $\bm{w} \in \mathbb{R}^n$ such that $||\bm{w}||_2 = 1$ and such that with probability 1 over the choice of $\bm{x} \sim D$, we have that $y(\bm{w}\cdot\bm{x}) \geq \gamma$.
\end{definition*} 
We will prove the following Lemma:
\begin{lemma}
Let $D$ be a distribution separable with margin $\gamma$. Using the algorithm above we can get generalization error of $0$ after seeing only $O(n(\log n + \log R + \log \frac{1}{\gamma}))$ labels, where $n$ is the dimension, and $R$ is a constant such that for every $\bm{x} \sim D$ we have $||\bm{x}||_2 \leq R$.
\end{lemma}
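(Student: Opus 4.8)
The plan is to mimic the proof of Theorem \ref{main}, but to replace the angle/uniform-distribution argument by a direct Euclidean perturbation estimate that exploits the margin. First I would reconcile the two normalizations: the margin hypothesis supplies a vector $\bm{\tilde w}$ with $\|\bm{\tilde w}\|_2 = 1$ and $y(\bm{\tilde w}\cdot\bm{x}) \geq \gamma$ almost surely, whereas the algorithm operates on $D^{L1}$. Setting $\bm{w^*} = \bm{\tilde w}/\|\bm{\tilde w}\|_1 \in D^{L1}$ changes no sign, so realizability holds with this $\bm{w^*}$, and since $\|\bm{\tilde w}\|_1 \leq \sqrt{n}\,\|\bm{\tilde w}\|_2 = \sqrt{n}$ we still get $y(\bm{w^*}\cdot\bm{x}) \geq \gamma/\sqrt{n}$ almost surely; write $\gamma' := \gamma/\sqrt{n}$. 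By Corollary \ref{cor_1}, $\bm{w^*}$ lies in every simplex the algorithm maintains, in particular in the returned one.

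Second, I would prove the key claim: if at some iteration $d(S_n) \leq \delta$ with $\delta < \gamma'/R$, then \emph{every} $\bm{w}\in S_n$ (in particular the returned center $\tfrac1n\sum_i \bm{a_i}$) has generalization error $0$. Indeed, since $\bm{w^*}\in S_n$ and $d(S_n)\leq\delta$ we have $S_n \subseteq B_\delta^{L2}(\bm{w^*})$, hence $\|\bm{w}-\bm{w^*}\|_2 \leq \delta$; then for every $\bm{x}$ in the support, using Cauchy--Schwarz,
\[ y(\bm{w}\cdot\bm{x}) = y(\bm{w^*}\cdot\bm{x}) + y\big((\bm{w}-\bm{w^*})\cdot\bm{x}\big) \geq \gamma' - \|\bm{w}-\bm{w^*}\|_2\,\|\bm{x}\|_2 \geq \gamma' - \delta R > 0, \]
so $\mathrm{sign}(\bm{w}\cdot\bm{x}) = y$ on a set of probability $1$, i.e. the error is $0$.

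Third, I would run the while loop until $d(S_n) < \gamma'/R = \gamma/(R\sqrt{n})$ (this simply replaces the $\epsilon$-dependent stopping threshold in the pseudocode) and count iterations via Theorem \ref{diam_bound}. The initial simplex $S_1$ has diameter $\sqrt{2}$, so after $p$ longest-edge bisections the diameter is at most $\sqrt{2}\,(\sqrt{3}/2)^{\lfloor p/n\rfloor}$; requiring $\sqrt{2}\,(\sqrt{3}/2)^{\lfloor p/n\rfloor} < \gamma/(R\sqrt{n})$ and rearranging (noting $\sqrt{3}/2 < 1$) gives
\[ p > n\log_{\frac{\sqrt{3}}{2}}\!\Big(\frac{\gamma}{\sqrt{2}\,R\sqrt{n}}\Big) = O\big(n(\log n + \log R + \log\tfrac1\gamma)\big). \]
Adding the $n$ queries of step $2$, the total label count is $p + n = O(n(\log n + \log R + \log\frac1\gamma))$, as claimed.

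The only genuinely delicate point is the normalization bookkeeping in the first step — checking that passing from the $L2$-unit margin vector to its $L1$-normalized counterpart costs only a $\sqrt{n}$ factor (harmlessly absorbed into the $\log n$ term) and that the Euclidean perturbation bound is then applied with the correct radius $\gamma'/R$; everything afterward is the same diameter-shrinkage computation as in Theorem \ref{main}. It is worth stating explicitly that the conclusion ``error $0$'' is attainable here (unlike in the $\epsilon$ case) precisely because the margin gives the consistent region positive width, so shrinking the simplex far enough traps all surviving hypotheses strictly inside the correct halfspace for every point of the support.
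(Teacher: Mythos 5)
Your proposal is correct and follows essentially the same route as the paper's own argument: the same $\|\cdot\|_1 \leq \sqrt{n}\,\|\cdot\|_2$ renormalization giving an effective margin $\gamma/\sqrt{n}$, the same Cauchy--Schwarz perturbation bound $y(\bm{w}\cdot\bm{x}) \geq \gamma' - \delta R$, and the same diameter-shrinkage count via Theorem \ref{diam_bound}. The only cosmetic difference is the stopping threshold (you use $\delta < \gamma'/R$ with a strict inequality where the paper uses $\delta \leq \gamma'/(2R)$ and lands on the explicit lower bound $\gamma/(2\sqrt{n})$), which does not affect the asymptotic label count.
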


Since for every $\bm{w} \in \mathbb{R}^n$ we have that: $||\bm{w}||_1 \leq \sqrt{n}\cdot||\bm{w}||_2$, it implies that for a distribution $D$, which is separable by $\bm{h}$ with $||\bm{h}||_2 = 1$ we have:
$$y(\bm{h}\cdot\bm{x}) \geq \gamma \longrightarrow y(\frac{\bm{h}}{||\bm{h}||_1}\cdot\bm{x}) \geq \frac{\gamma}{||\bm{h}||_1} \geq \frac{\gamma}{\sqrt{n}} $$

For brevity, we will denote $\bm{h^{*}} = \frac{\bm{h}}{||\bm{h}||_1}$.
This time, we will continue running the algorithm until $d(S_n) \leq \frac{\gamma}{2\sqrt{n}R}$, where $||\bm{x}||_2 \leq R$ for every $\bm{x} \sim D$. By \ref{diam_bound}, we can guarantee it happens after p iterations, for every p such that:
$$ p \geq n\log_{\frac{\sqrt{3}}{2}}{\frac{\gamma}{2\sqrt{2n}R}}$$
In this case, when the algorithm returns $\bm{w}$ after using $p + n$ labels, since 
$d(S_n) \leq \frac{\gamma}{2\sqrt{n}R}$ we are guaranteed that $||\bm{h^*} - \bm{w}||_2 \leq \frac{\gamma}{2\sqrt{n}R}$. Therefore, for every $\bm{x} \sim D$ with label $y$ we have:
\begin{equation}
\begin{split}
    y(\bm{w}\cdot{\bm{x}}) & = y((\bm{w} + \bm{h^{*}} - \bm{h^{*}}) \cdot{\bm{x}}) = y(\bm{h^{*}}\cdot\bm{x}) + y((\bm{w} - \bm{h^{*}})\cdot\bm{x}) \\ 
    & \geq \frac{\gamma}{\sqrt{n}} + y((\bm{w} - \bm{h^{*}})\cdot\bm{x}) \geq  \frac{\gamma}{\sqrt{n}} - ||\bm{w}-\bm{h^{*}}||_2||\bm{x}||_2 \\
    & \geq \frac{\gamma}{\sqrt{n}} - \frac{\gamma}{2\sqrt{n}R} \cdot R = \frac{\gamma}{2\sqrt{n}}
\end{split}
\end{equation}

Where the first inequality comes from separability with margin, and the second one comes from Cauchy-Schwarz inequality.
So in general we see that for every $\bm{x} \sim D$ it holds that:
$$y(\bm{w}\cdot{\bm{x}}) \geq \frac{\gamma}{2\sqrt{n}} > 0$$
Which implies that the generalization error would be 0 in this case.
Since we have:
$$p + n =  n\log_{\frac{\sqrt{3}}{2}}{\frac{\gamma}{2\sqrt{2n}R}} + n = O(n(\log n + \log R + \log \frac{1}{\gamma}))$$
So we showed that $O(n(\log n + \log R + \log \frac{1}{\gamma}))$ labels are sufficient in order to reach generalization error of 0, when $D$ is separable with margin $\gamma$, and for every $\bm{x} \sim D$ we have $||\bm{x}||_2 \leq R$.



\end{document}